\newcommand{\NN}{{\Bbb N}}
\newcommand{\RR}{{\Bbb R}}
\newcommand{{\uk}}{\mbox{$\underline{k}$}}
\def\nod(#1,#2){\put(#1,#2){\circle*{.125}}
\put(#1,#2){\makebox(0,0.5){{\small$#2$}}}}%
\def\rod(#1,#2){\put(#1,#2){\circle*{.2}}}
\def\NOD(#1,#2)#3{\put(#1,#2){\circle*{.2}}\put(#1,#2){\makebox(0,0.8){{\small$#3$}}}}
\def\EXX{{\hfill{$\diamondsuit$}}}
\newcounter{exampleNo}
\newtheorem{theorem}{Theorem}[section]
\newtheorem{proposition}[theorem]{Proposition}
\newenvironment{example}[1][Example \arabic{exampleNo}.]{\begin{trivlist}\refstepcounter{exampleNo}
\item[\hskip \labelsep {\bfseries #1}]}{\end{trivlist}}
\title{Computations in Stochastic Acceptors}
\author{Karl-Heinz Zimmermann\footnote{Email: k.zimmermann@tuhh.de}\\
Department of Electrical Engineering, Computer Science, Mathematics\\
Hamburg University of Technology\\
21071 Hamburg, Germany}
\begin{document}
\maketitle
\begin{abstract}
Machine learning provides algorithms that can learn from data and make inferences or predictions on data.
Stochastic acceptors or probabilistic automata are stochastic automata without output that can model components in machine learning scenarios.
In this paper, we provide dynamic programming algorithms for the computation of input marginals and the acceptance probabilities in stochastic acceptors. 
Furthermore, we specify an algorithm for the parameter estimation of the conditional probabilities 
using the expectation-maximiza\-tion technique and a more efficient implementation related to the Baum-Welch algorithm.
\end{abstract}
\medskip

\mbox{\bf AMS Subject Classification:} 68Q70, 68T05 
\medskip

\mbox{\bf Keywords:} Probabilistic automaton, dynamic programming, parameter estimation, EM algorithm, Baum-Welch algorithm

\section{Introduction}

The theory of discrete stochastic systems has been first studied by Shannon~\cite{shannon} and von Neumann~\cite{neumann}.
Shannon has considered memory-less communication channels and their generalization by introducing states,
while von Neumann has investigated the synthesis of reliable systems from unreliable components.
The seminal research work of Rabin and Scott~\cite{rscott} about deterministic finite-state automata has led to two generalizations.
First, the generalization of transition functions to conditional distributions studied by Carlyle~\cite{carl} and Starke~\cite{starke}.
%This in turn provides a generalization of discrete-time Markov chains in which the chains are governed by more than one transition probability matrix.
Second, the generalization of regular sets by introducing stochastic acceptors as described by Rabin~\cite{rabin}. 

A stochastic acceptor or probabilistic automaton is a stochastic automaton without output~\cite{claus, salomaa, zim-SA}.
It generalizes the nondeterministic finite automaton by involving the probability of transition from one state to another and in this way generalizes the concept of Markov chain.
The languages accepted by stochastic acceptors are called stochastic languages.
The class of stochastic languages is uncountable and includes the regular languages as a proper subclass. 

Stochastic automata have widespread use in the modeling of stochastic systems such as in traffic theory and in spoken language understanding 
for the recognition and interpretation of speech signals~\cite{claus, rab-juang, ricc}.
%In particular, the most successful parsing algorithm for finding the most likely sequence of spoken words is Viterbi decoding along with beam search~\cite{rab-juang}.
%A unified formalism for building a wide class of language models are the variable multi-gram stochastic automata~\cite{ricc}.
%The hidden Markov model (HMM) is a Bayesian network which is related to stochastic automata~\cite{barber,koski,zim}.
%It models a Markov chain with observed (emission) data and unobserved (hidden) states.
%The inference problem in a HMM is to find the most probable state transitions given an emission sequence.
%The famous Viterbi algorithm is a dynamic programming algorithm which solves this problem~\cite{viterbi}.
%The learning of parameters in a HMM is the task to find the best set of state transitions and emission probabilities given a set of emission sequences~\cite{rab}.
%Tractable algorithms for solving this problem are not known, 
%but local maximum likelihood algorithms such as the Baum-Welch algorithm 
%as a special case of the famous expectation-maximization algorithm can be efficiently applied~\cite{sturm,zim}.
%Stochastic automata can be extended to the modeling of soft real-time systems by using methods from timed automata and generalized semi-Mar\-kov chains.
%Such automata allow to model constraints like ''the system should perform an action before time $t$ in 90\% of the cases''~\cite{argI,argII}.  
They can be used as building blocks in situations of machine learning where detailed mathematical description is missing and feature management is noisy.
The arrangement of stochastic automata in the form of teams or hierarchies could lead to solutions of complex inference problems~\cite{mal}.

Stochastic acceptors have been generalized to a quantum analog, the quantum finite automaton~\cite{kondacs}.  
The latter are linked to quantum computers as stochastic acceptors are connected to conventional computers.

In this paper, we provide dynamic programming algorithms for the computation of input marginals and the acceptance probabilities in a stochastic acceptor. 
Moreover, we specify an algorithm for the parameter estimation of the conditional probabilities using the expectation-maximization technique and a variant of the Baum-Welch algorithm.
The text is to a large extent self-contained and also suitable to non-experts in this field.

\section{Mathematical Preliminaries} 
%CHECK citation
A {\em stochastic acceptor\/}  (SA)~\cite{claus, rabin, salomaa} is a quintuple $A = (S,\Sigma, P, \pi, f)$, where
$S$ is a nonempty finite set of {\em states},
$\Sigma$ is an alphabet of {\em input symbols},
$P$ is a collection $\{P(a)\mid a\in\Sigma\}$ of stochastic $n\times n$ matrices, where $n$ is the number of states, 
$\pi$ is the initial distribution of the states written as row vector, and $f$ is a binary column vector of length $n$ called {\em final state vector}.

Let $S=\{s_1,\ldots,s_n\}$ be the state set.
Then the final state vector is $f=(f_1,\ldots,f_n)^t$ and $F = \{s_i\mid f_i=1\}$ is the {\em final state set}.
Moreover, the matrices $P(a) = (p_{ij}(a))$ with $a\in\Sigma$ are transition probability matrices, where the $(i,j)$th entry $p_{ij}(a)= p(s_j\mid a,s_i)$ is the 
conditional probability of transition from state $s_i$ to state $s_j$ when the symbol $a$ is read, $1\leq i,j\leq n$.
Thus for each symbol $a\in\Sigma$ and each state $s\in S$,
\begin{eqnarray}\label{e-SA-marg0}
\sum_{s'\in S} p(s'\mid a,s) =1.
\end{eqnarray}
%Let $A$ be a stochastic automaton.
Given a conditional probability distribution $p(\cdot\mid a,s)$ on $\Sigma\times S$,  
a probability distribution $\hat p$ on $\Sigma^*\times S$ can be defined recursively as follows.
\begin{itemize}
\item For each $s,s'\in S$,
\begin{eqnarray}\label{e-SA-phat1}
\hat p (s'\mid \epsilon,s) = \left\{ \begin{array}{ll} 1 & \mbox{if } s=s',\\ 0 & \mbox{if } s\ne s',  \end{array} \right.
\end{eqnarray}
where $\epsilon$ denotes the empty word in $\Sigma^*$.
\item For all $s,s'\in S$, $a\in \Sigma$, and $x\in\Sigma^*$, 
\begin{eqnarray}\label{e-SA-phat3}
\hat p (s'\mid xa,s) = \sum_{t\in S} \hat p(t\mid x,s)\cdot p(s'\mid a,t).
\end{eqnarray}
\end{itemize}
Then $\hat p(\cdot\mid x,s)$ is a conditional probability distribution on $\Sigma^*\times S$ and so we have
\begin{eqnarray}\label{e-SA-phat4}
\sum_{s'\in S} p(s'\mid x,s) =1,\quad x\in\Sigma^*, s\in S.
\end{eqnarray}
Note that the measures $p$ and $\hat p$ coincide on the set $S\times\Sigma\times S$ if we put $x=\epsilon$ in~(\ref{e-SA-phat3}).
Therefore, we write $p$ instead of $\hat p$.

A stochastic acceptor works serially and synchronously.
It reads an input word symbol by symbol and after reading an input symbol it transits into another state.
In particular, if the automaton starts in state $s$ and reads the word $x$, then
with probability $p(s'\mid x,s)$ it will end in state $s'$ taking all intermediate states into account.

\begin{proposition}\label{p-SA-prob0}
For all $x,x'\in\Sigma^*$ and $s,s'\in S$, 
\begin{eqnarray}
p(s'\mid xx',s) = \sum_{t\in S} p(t\mid x,s) \cdot p(s'\mid x',t). 
\end{eqnarray}
\end{proposition}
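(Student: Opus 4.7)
The plan is to prove this by induction on the length of the right factor $x'$, since the recursion~(\ref{e-SA-phat3}) extends words by one symbol on the right, making $x'$ the natural variable to peel off.

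For the base case $x' = \epsilon$, equation~(\ref{e-SA-phat1}) reduces the sum on the right to $\sum_{t\in S} p(t\mid x,s) \cdot \hat p(s'\mid \epsilon,t) = p(s'\mid x,s)$, since only the term $t=s'$ survives. The left side is $p(s'\mid x\epsilon, s) = p(s'\mid x, s)$ by the definition of word concatenation, so the two sides match.

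For the inductive step, assume the identity holds for some $x'\in\Sigma^*$ and let $a\in\Sigma$. I would apply~(\ref{e-SA-phat3}) to the left side with the word $xx'$ extended by $a$:
\begin{eqnarray*}
p(s'\mid x(x'a),s) = p(s'\mid (xx')a,s) = \sum_{u\in S} p(u\mid xx',s)\cdot p(s'\mid a,u),
\end{eqnarray*}
and then replace $p(u\mid xx',s)$ using the induction hypothesis. On the right side, I would apply~(\ref{e-SA-phat3}) directly to $p(s'\mid x'a,t)$. After swapping the order of the two resulting finite sums (over $t$ and $u$), both expressions coincide, completing the induction.

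The argument is essentially a bookkeeping exercise in the recursive definition, so I do not anticipate any real obstacle; the only point that needs care is keeping the associativity of concatenation $x(x'a)=(xx')a$ explicit so that equation~(\ref{e-SA-phat3}) can be invoked on the correct split, and justifying the interchange of the two finite sums over $S$.
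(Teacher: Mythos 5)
Your proof is correct. The paper actually states Proposition~\ref{p-SA-prob0} without giving any proof, and your induction on the length of $x'$ --- base case via~(\ref{e-SA-phat1}), inductive step via~(\ref{e-SA-phat3}) together with an exchange of two finite sums over $S$ --- is precisely the standard argument that fills this gap; every step you outline goes through without obstruction.
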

%\begin{proof}
%First, if $|x'|\ne |y'|$, then $|xx'|\ne |yy'|$ and so both sides are zero by~(\ref{e-SA-phat2}).
%Second, if $x'=\epsilon=y'$, then both sides are equal to $p(y, s'\mid x,s)$ by~(\ref{e-SA-phat1}).
%Third, if $x'\in \Sigma$ and $y'\in \Omega$, then both sides are equal by~(\ref{e-SA-phat3}).
%Finally, suppose the assertion holds for words of length $|x'| = |y'|\leq k$ for some $k\geq 1$.
%Then consider words $x'a$ and $y'b$ of length $k+1$, where $a\in\Sigma$ and $b\in\Omega$.
%We have
%\begin{eqnarray*}
%p(yy'b, s'\mid xx'a,s) 
%&=& \sum_{t\in S} p(yy', t\mid xx',s) \cdot p(b, s'\mid a,t),\quad \mbox{by~(\ref{e-SA-phat3})},\\
%&=& \sum_{t\in S} \sum_{t'\in S} p(y, t'\mid x,s) p(y', t\mid x',t') \cdot p(b, s'\mid a,t),\quad\mbox{by induction},\\
%&=& \sum_{t'\in S} p(y, t'\mid x,s) \sum_{t\in S} p(y', t\mid x',t') \cdot p(b, s'\mid a,t)\\
%%&=& \sum_{t'\in S} p(y, t'\mid x,s) \sum_{t\in S} p(y', t\mid x',t') \cdot p(b, s'\mid a,t)\\
%&=& \sum_{t'\in S} p(y, t'\mid x,s) \cdot p(y'b, s'\mid x'a,t'),  \quad \mbox{by~(\ref{e-SA-phat3})}.
%\end{eqnarray*}
%\end{proof}
This result can be described by probability matrices.
To this end, for the empty word $\epsilon\in\Sigma^*$ put
\begin{eqnarray}
P(\epsilon) = I_n,
\end{eqnarray}
where $I_n$ is the $n\times n$ unit matrix. 
Furthermore, if $a\in\Sigma$ and $x\in\Sigma^*$, then by~(\ref{e-SA-phat3}) 
\begin{eqnarray}
P(xa) = P(x)\cdot P(a).
\end{eqnarray}
By Prop.~\ref{p-SA-prob0} and the associativity of matrix multiplication, we obtain the following
\begin{proposition}\label{p-SA-prob1}
For all $x,x'\in\Sigma^*$,
\begin{eqnarray}
P(xx') = P(x)\cdot P(x').
\end{eqnarray}
\end{proposition}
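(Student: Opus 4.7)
The plan is to read off the claim directly from Proposition~\ref{p-SA-prob0} by interpreting the conditional distribution $p(\cdot\mid x,s)$ as the matrix $P(x)$. Concretely, the $(i,j)$ entry of $P(xx')$ is by definition $p(s_j\mid xx',s_i)$, and I want to show this equals the $(i,j)$ entry of $P(x)\cdot P(x')$.

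First I would fix indices $i,j\in\{1,\ldots,n\}$ and expand the right-hand side using the standard formula for matrix multiplication, obtaining $\sum_{k=1}^{n} p(s_k\mid x,s_i)\cdot p(s_j\mid x',s_k)$. Next I would apply Proposition~\ref{p-SA-prob0} with $s=s_i$, $s'=s_j$ and summation variable $t=s_k$, which rewrites this sum as $p(s_j\mid xx',s_i)$. That is exactly the $(i,j)$ entry of $P(xx')$, so the two matrices agree entrywise.

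Strictly speaking, Proposition~\ref{p-SA-prob0} itself needs to be proved first (presumably by induction on $|x'|$ using the recursive definitions \eqref{e-SA-phat1} and \eqref{e-SA-phat3}), but the excerpt allows me to invoke it. Given that, there is essentially no obstacle: the statement is just a repackaging of Proposition~\ref{p-SA-prob0} in matrix language, and the associativity of matrix multiplication mentioned in the lead-in is not actually needed for the two-word case—it becomes relevant only if one wishes to iterate the identity to longer concatenations. The only care point is keeping the order of factors consistent with the convention that $\pi$ acts as a row vector on the left, so that $P(x)$ is multiplied on the left by $P(x')$ on the right when reading $x$ first and then $x'$.
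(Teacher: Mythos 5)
Your argument is correct and matches the paper's, which simply derives the matrix identity from Proposition~\ref{p-SA-prob0} without further elaboration; you merely spell out the entrywise verification that the sum over $t$ in that proposition is exactly the $(i,j)$ entry of the product $P(x)\cdot P(x')$. Your side remark that associativity is only needed for iterating to longer concatenations (as in~(\ref{e-SA-prob2})) is also accurate.
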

\noindent
It follows by induction that if $x=x_1\ldots x_k\in\Sigma^*$, then
\begin{eqnarray}\label{e-SA-prob2}
P(x) = P(x_1)\cdots P(x_k).
\end{eqnarray}

Let $A= (S,\Sigma, P, \pi,f)$ be a stochastic acceptor and let $\lambda$ be a real number with $0\leq\lambda\leq 1$.
The set
\begin{eqnarray}\label{e-L}
L_{A,\lambda} = \{x\in\Sigma\mid \pi P(x) f>\lambda\}
\end{eqnarray}
is the language of $A$ w.r.t.\ $\lambda$, and $\lambda$ is the {\em cut point\/} of $L_{A,\lambda}$.

\begin{example}
Let $p\geq 2$ be an integer.
Consider the $p$-adic stochastic acceptor $A = (\{s_1,s_2\},\{0,\ldots,p-1\},P, \pi,f)$ with
$$P(a)= \left(\begin{array}{cc}1-\frac{a}{p} &\frac{a}{p}\\1-\frac{a+1}{p}& \frac{a+1}{p}\end{array}\right),
\; 0\leq a\leq p-1,
\; \pi=(1,0), 
\;\mbox{and}\; f=\left(\begin{array}{c}0\\1\end{array}\right).$$ 
See Fig.~\ref{fi-SA-A}.
Each word $x=x_1\ldots x_k\in\{0,\ldots,p-1\}^*$ can be assigned the real number whose $p$-adic representation is $0.x_k\ldots x_1$.
For each cut point $\lambda$,  the accepted language is
$$L_{A,\lambda} = \{x_1\ldots x_k\in\{0,\ldots,p-1\}^* \mid 0.x_k\ldots x_1>\lambda\}.$$
Note that the language $L_{A,\lambda}$ is regular if and only if the cut point $\lambda$ is rational~\cite{claus, rabin, rscott}.
\EXX
\end{example}
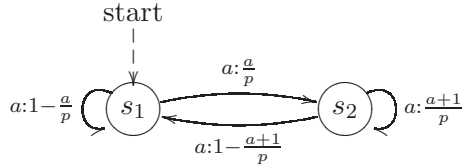
\begin{figure}[hbt]
\begin{center}
\mbox{$
\xymatrix{
\txt{start}\ar@{-->}[d] && \\
*++[o][F-]{s_1} 
\ar@(ul,dl)[]_{a:1-\frac{a}{p}}
\ar@/^/[rr]^{a:\frac{a}{p}}
%\ar@{->}[rr]^{a:\frac{a}{p}}
%\ar@{-->}[d]
&&
*++[o][F-]{s_2} 
\ar@/^/[ll]^{a:1-\frac{a+1}{p}}
\ar@(ur,dr)[]^{a:\frac{a+1}{p}}
%\ar@{-->}[d]
%\ar@/^/[ll]^{0.1} \\
%{0.5(h),0.5(t)}& \txt{output} & {0.75(h),0.25(t)}
%&& \txt{end}\\
}
$}
\end{center}
\caption{State diagram of $A$.}\label{fi-SA-A}
\end{figure}

For each input word $x\in\Sigma^*$, the stochastic matrix $P(x)$ can be viewed as generating a discrete-time Markov chain.
Thus the behavior of a stochastic automaton is an interleaving of Markov chains each of which corresponding to a single input symbol.
\section{Input Marginals and Acceptance Probabilities}
%CHECK: cite dynamic programming
The input marginals and the acceptance probabilities can be computed by the technique of dynamic programming~\cite{bellman} using sum-product decomposition.

To see this, let $A=(S,\Sigma,P,\pi,f)$ be a stochastic acceptor with $l$-element state set $S$ and $l'$-element input set $\Sigma$.
A stochastic acceptor can be viewed as a belief network. 
To this end, let $n\geq 1$ be an integer.
Let $X_1,\ldots,X_n$ be random variables with common state set $\Sigma$ and
let $S_1,\ldots,S_{n+1}$ be random variables with common state set $S$.
The stochastic acceptor can be described for inputs of length $n$ by the belief network~\cite{barber,koski,zim-SA} as shown in Fig.~\ref{f-SA-bn}.
Then the corresponding joint probability distribution factoring according to the network is given by 
\begin{eqnarray}
p_{X,S} = p_{S_1}p_{X_1}p_{S_2|X_1,S_1} p_{X_2} p_{S_3|X_2,S_2}\cdots p_{X_n} p_{S_{n+1}|X_n,S_n}.
\end{eqnarray}
We assume for simplicity that the initial distributions $p_{X_i}$ are uniform; 
i.e., $p_{X_i}(x)=\frac{1}{l'}$ for all $x\in \Sigma$ and $1\leq i\leq n$.
Moreover, the network is assumed to be {\em homogeneous\/} in the sense that the conditional distributions 
$p_{S_{i+1}|X_i,S_i}$ are independent of the index $i$, $1\leq i\leq n$.
Therefore, we put
\begin{eqnarray}
\theta_{s';a,s} = p_{S_{i+1}|X_i,S_i}(s'\mid a,s), \quad s,s'\in S,\;a\in\Sigma, \;1\leq i\leq n.
\end{eqnarray}
It follows that the joint probability distribution has the form
\begin{eqnarray}
p_{X,S}(x_1,\ldots,x_n,s_1,\ldots,s_{n+1}) = \frac{1}{l'^n} \pi_{s_1} \theta_{s_2;x_1,s_1} \cdots \theta_{s_{n+1};x_n,s_n}.
\end{eqnarray}
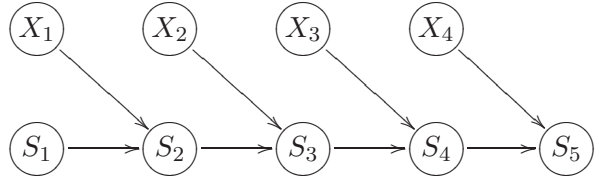
\begin{figure}[hbt]
\begin{center}
\mbox{$
\xymatrix{
*++[o][F-]{X_1} \ar@{->}[dr] & *++[o][F-]{X_2} \ar@{->}[dr] & *++[o][F-]{X_3} \ar@{->}[dr] & *++[o][F-]{X_4} \ar@{->}[rd] \\
*++[o][F-]{S_1} \ar@{->}[r]  & *++[o][F-]{S_2 } \ar@{->}[r] & *++[o][F-]{S_3}  \ar@{->}[r] & *++[o][F-]{S_4} \ar@{->}[r] & *++[o][F-]{S_5} \\
%& *++[o][F-]{Y_1} & *++[o][F-]{Y_2}  & *++[o][F-]{Y_3}  & *++[o][F-]{Y_4}  \\
}
$}
\end{center}
\caption{Belief network of stochastic acceptor with $n=4$.}\label{f-SA-bn}
\end{figure}

The probability of an input sequence $x=x_1\ldots x_n\in\Sigma^n$ is given by the marginal distribution
\begin{eqnarray}
p_{X}(x) = \sum_{s\in S^{n+1}} p_{X,S}(x,s).
%= \frac{1}{l\cdot l_1} \sum_{s_1\in\Sigma}\ldots \sum_{s_{n+1}\in\Sigma} p_{Y_1,S_2|X_1,S_1}(y_1,s_2\mid x_1,s_1) \cdots p_{Y_n,S_{n+1}|X_n,S_n}(y_n,s_{n+1}\mid x_n,s_n).
\end{eqnarray}
%where for the state sequence $s=s_1\ldots s_{n+1}$, we write
%\begin{eqnarray}
%p_{X,S,Y}(x,s,y) = \frac{1}{l\cdot l_1} p(y_1,s_2\mid x_1,s_1) p(y_2,s_3\mid x_2,s_2) \cdots p(y_n,s_{n+1}\mid x_n,s_n).
%\end{eqnarray}
The corresponding sum-product decomposition yields
\begin{eqnarray}\label{e-SA-spd}
\lefteqn{ p_{X}(x) = }\\
&&
\frac{1}{l'^n} 
\sum_{s_{n+1}\in S} 
\left(
\sum_{s_n\in S} \theta_{s_{n+1}; x_n,s_n}
\left(
\ldots 
\left(
\sum_{s_1\in S} \theta_{s_2; x_1,s_1}\cdot \pi_{s_1}\right)\ldots\right)\right).\nonumber
\end{eqnarray}
According to this decomposition, the marginal probability $p_X(x)$ can be calculated by using an $n\times l$ table $M$:
\begin{eqnarray}\label{e-M-1}
M[0,s] &:=& 
\pi_{s},\quad s\in S,\nonumber\\
M[k,s] &:=& \sum_{s'\in S} \left(\theta_{s;x_k,s'} \cdot M[k-1,s']\right), \quad s\in S,\,1\leq k\leq n,\\
%M[n,s] &:=& v_{x,y_n}\odot M[n-1,x],\quad x\in\Sigma,\nonumber\\ 
p_X(x) &:=& \frac{1}{l'^n} \sum_{s\in S} M[n,s].\nonumber
\end{eqnarray}
The time complexity of this algorithm is $O(l^2n)$, since the table $M$ has size $O(ln)$ and each table entry is computed in $O(l)$ steps.
The marginal probabilities $p_X(x)$ will be used in the EM and BM algorithms later on.

On the other hand, the acceptance probability of an input sequence $x=x_1\ldots x_n\in\Sigma^n$ is given by the sum-product decomposition
\begin{eqnarray}\label{e-M-2}
\lefteqn{\pi P(x) f = }\\
&&
\sum_{s_{n+1}\in S} 
f_{s_{n+1}}
\left(
\sum_{s_n\in S} \theta_{s_{n+1}; x_n,s_n}
\left(
\ldots 
\left(
\sum_{s_1\in S} \theta_{s_2; x_1,s_1}\cdot \pi_{s_1} \right)\ldots\right)\right).\nonumber
\end{eqnarray}
This decomposition can be used to compute the acceptance probability by using an $n\times l$ table $M$:
\begin{eqnarray}
M[0,s] &:=& \pi_s,\quad s\in S,\nonumber\\
M[k,s] &:=& \sum_{s'\in S} \left(\theta_{s;x_k,s'} \cdot M[k-1,s']\right), \quad s\in S,\,1\leq k\leq n,\\
%M[n,s] &:=& v_{x,y_n}\odot M[n-1,x],\quad x\in\Sigma,\nonumber\\ 
\pi P(x) f &:=& \sum_{s\in S} f_s\cdot M[n,s].\nonumber
\end{eqnarray}
Similarly, the time complexity of this algorithm is $O(l^2n)$, since the table $M$ has size $O(ln)$ and each table entry is computed in $O(l)$ steps.

\section{Parameter Estimation}

The objective is to estimate the conditional probabilities of a stochastic acceptor by using sample data.
For this, the stochastic acceptor is viewed as a belief network as described in the previous section.
%The sample data are assumed to consist of pairs of state and output sequences.
%This can be achieved by the maximum likelihood method.
%In the {\em fully observed Markov model\/} the observer has access to both a state sequence and the associated output sequence.
For this, let $A=(S,\Sigma,P,\pi,f)$ be a stochastic acceptor with $l=|S|$ and $l'=|\Sigma|$, and let $n\geq 1$.
Take the parameter set
\begin{eqnarray}
\Theta = \{\theta=(\theta_{s';a,s})\mid \theta_{s';a,s}\geq 0,\sum_{s'}\theta_{s';a,s} = 1\}.
\end{eqnarray}
where
\begin{eqnarray}
\theta_{s';a,s} = p(s'\mid a,s),\quad a\in\Sigma,\; s,s'\in S.
\end{eqnarray}

The aim is to estimate these probabilities by making use of a sample set.
For this, assume that there is a collection $D= (d_1,\ldots,d_N)$ of $N$ independent samples called {\em database}, where 
$d_r= (x_r,s_r) \in\Sigma^n\times S^{n+1}$ denotes the $r$-th sample, $1\leq r\leq N$.
For simplicity, suppose the initial distributions $p_{X_i}$ are uniform as before, $1\leq i\leq n$.
%i.e., $p_{X_1}(x)=\frac{1}{l}$ for all $x\in S$ and $p_{X_1}(x)=\frac{1}{l'}$ for all $x\in \Sigma$.
Then the joint probability of the sample $d_r=(x_r,s_r)$ depending on the parameters is given by
\begin{eqnarray}
p_{X,S|\Theta} (d_r \mid \theta) 
= \frac{1}{l'^n} \pi_{s_1}\prod_{i=1}^n \theta_{s_{r,i+1}; x_{r,i}, s_{r,i}}.
\end{eqnarray}
Thus the likelihood function $L=L_{X,S}$ is given by
\begin{eqnarray}
L(\theta) 
&=& \prod_{r=1}^N p_{X,S|\Theta} (d_r \mid \theta)= \prod_{(x,s)} p_{X,S|\Theta} (x,s\mid \theta)^{u_{x,s}}, 
\end{eqnarray}
where $u_{x,s}$ is the number of times the input-state pair $(x,s)$ is observed in the sample set.
Therefore, we have
\begin{eqnarray}
\sum_{(x,s)} u_{x,s}  = N.
\end{eqnarray}

Let $v_{s';a,s}$ be the number of times the parameter $\theta_{s';a,s}$ occurs in the likelihood function $L(\theta)$.
Then the likelihood function can be written (up to a constant) as
\begin{eqnarray}
L(\theta) = \prod_{a\in\Sigma} \prod_{s,s'\in S} \theta_{s';a,s}^{v_{s';a,s}}.
\end{eqnarray}
The corresponding log-likelihood function $\ell=\ell_{X,S}$ is
\begin{eqnarray}
\ell(\theta) = \log L(\theta) = \sum_{a\in\Sigma} \sum_{s,s'\in S} v_{s';a,s} \theta_{s';a,s}.
\end{eqnarray}

The data $v=(v_{s';a,s})$ form the sufficient statistic of the model.
These data can be obtained from the given data $u=(u_{x,s})$ by the linear transformation
\begin{eqnarray}\label{e-SA-A}
v = B_{l,l'}\cdot u,
\end{eqnarray}
where $B=B_{l,l'}$ is an integral matrix with $d=l^2l'$ rows labeled by the triples $(s';a,s)$ with $a\in\Sigma$ and $s,s'\in S$.
Moreover, the matrix has $m=l'^nl^{n+1}$ columns labeled by the pairs $(x,s)\in\Sigma^n\times S^{n+1}$.
The matrix has entry~$k$ in row $(s';a,s)$ and column $(x,s)$ if the parameter $\theta_{s';a,s}$ occurs $k$ times in $p_{X,S|\Theta}(x,s)$. 
Note that the matrix has column sum $n$, since the quantity $p_{X,S|\Theta}(x,s)$ has $n$ factors.

\begin{example}
Consider the 2-adic stochastic acceptor $A$ with state set $S =\{a,b\}$ and input set $\Sigma = \{0,1\}$, and let $n=2$.
The associated $8\times 32$ matrix $B=B_{2,2}$ is as follows,
{\tiny 
$$\bordermatrix{
~       & 00,aaa  & 00,aab    & 00,aba    & 00,abb    & 00,baa    & 00,bab    & 00,bba    & 00,bbb &\ldots & 11,bbb \cr
a;0,a & 2         & 1         & 0         & 0         & 1         &  0        & 0         & 0      &       & 0  \cr
b;0,a & 0         & 1         & 1         & 1         & 0         &  1        & 0         & 0      &       & 0  \cr
a;1,a & 0         & 0         & 0         & 0         & 0         &  0        & 0         & 0      &       & 0  \cr
b;1,a & 0         & 0         & 0         & 0         & 0         &  0        & 0         & 0      &       & 0  \cr
a;0,b & 0         & 0         & 1         & 0         & 1         &  1        & 1         & 0      &       & 0  \cr
b;0,b & 0         & 0         & 0         & 1         & 0         &  0        & 1         & 2      &       & 0  \cr
a;1,b & 0         & 0         & 0         & 0         & 0         &  0        & 0         & 0      &       & 0  \cr
b;1,b & 0         & 0         & 0         & 0         & 0         &  0        & 0         & 0      &       & 2  \cr
}. $$
}
\EXX
\end{example}

\begin{proposition}\label{p-SA-fo-l}
The maximum likelihood estimate of the likelihood function $L(\theta)$ is given by
\begin{eqnarray}
\hat\theta_{s';a,s} = \frac{v_{s';a,s}}{\sum_{s''\in S} v_{s'';a,s}},
\quad a\in\Sigma, \,s,s'\in S.
\end{eqnarray}
\end{proposition}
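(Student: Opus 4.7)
The plan is to maximize the log-likelihood subject to the stochasticity constraints \(\sum_{s'\in S}\theta_{s';a,s}=1\) for each fixed pair \((a,s)\in\Sigma\times S\). Because the log-likelihood
\[
\ell(\theta) = \sum_{a\in\Sigma}\sum_{s,s'\in S} v_{s';a,s}\,\log\theta_{s';a,s}
\]
splits into a sum of independent contributions indexed by \((a,s)\), and because each constraint involves only the parameters with one fixed \((a,s)\), the optimization decouples into \(l\cdot l'\) separate problems. So first I would isolate, for a fixed \((a,s)\), the partial objective \(\ell_{a,s}(\theta)=\sum_{s'\in S} v_{s';a,s}\log\theta_{s';a,s}\) to be maximized over the simplex defined by \(\theta_{s';a,s}\ge 0\) and \(\sum_{s'}\theta_{s';a,s}=1\).

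Next, I would apply the method of Lagrange multipliers. Introducing \(\lambda_{a,s}\) for the constraint and differentiating
\[
\ell_{a,s}(\theta) - \lambda_{a,s}\Bigl(\sum_{s'\in S}\theta_{s';a,s} - 1\Bigr)
\]
with respect to \(\theta_{s';a,s}\) yields the stationarity condition \(v_{s';a,s}/\theta_{s';a,s} = \lambda_{a,s}\), whence \(\theta_{s';a,s}=v_{s';a,s}/\lambda_{a,s}\). Summing over \(s'\in S\) and invoking the constraint determines \(\lambda_{a,s}=\sum_{s''\in S} v_{s'';a,s}\), which gives exactly the claimed estimator
\[
\hat\theta_{s';a,s} = \frac{v_{s';a,s}}{\sum_{s''\in S} v_{s'';a,s}}.
\]

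Finally, I would verify that this stationary point is indeed a global maximum. Since \(x\mapsto\log x\) is strictly concave, each \(\ell_{a,s}\) is concave on the open simplex, the feasible set is convex, and the unique interior critical point is therefore the unique maximizer; boundary points where some \(\theta_{s';a,s}=0\) with \(v_{s';a,s}>0\) yield \(\ell=-\infty\) and are excluded. The only genuinely delicate point, and the one I expect to be the main (though mild) obstacle, is the degenerate case in which \(\sum_{s''}v_{s'';a,s}=0\), i.e., the input-state pair \((a,s)\) was never observed; then the log-likelihood does not depend on the row \(\theta_{\cdot;a,s}\) at all, and the formula must be read as indeterminate—any stochastic row is a maximizer, and the convention is to leave those parameters unspecified (or to fall back on a prior). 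A brief remark to this effect would complete the proof.
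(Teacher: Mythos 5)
Your proof is correct and arrives at the same estimator by the same overall strategy (decouple the optimization over the rows indexed by $(a,s)$, find the stationary point, then certify global optimality), but both halves are carried out differently from the paper. For the stationarity computation the paper does not introduce Lagrange multipliers: it eliminates the constraint by substituting $\theta_{s_l;a,s}=1-\sum_{m\ne l}\theta_{s_m;a,s}$ and differentiates the reduced objective; your multiplier calculation is the standard equivalent and produces the same normalization. The genuine divergence is in the optimality certificate. You invoke strict concavity of $\log$ on the convex simplex, so the stationary point is the global maximizer, with boundary points where $\theta_{s';a,s}=0$ but $v_{s';a,s}>0$ excluded by the $-\infty$ behaviour of the logarithm. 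The paper instead follows Koski et al.\ and rewrites the log-likelihood as $\ell(\theta)=-\sum_{a,s}v_{a,s}\bigl(H(\hat\theta_{a,s})+D(\hat\theta_{a,s}\|\theta_{a,s})\bigr)$, so that $\ell(\theta)\le\ell(\hat\theta)$ follows from non-negativity of the Kullback--Leibler divergence. Your route is more elementary and self-contained; the paper's yields the explicit gap $\ell(\hat\theta)-\ell(\theta)=\sum_{a,s}v_{a,s}\,D(\hat\theta_{a,s}\|\theta_{a,s})$, which quantifies how far any other parameter falls short. Your remark on rows with $\sum_{s''}v_{s'';a,s}=0$ addresses a degeneracy the paper passes over silently; the one residual (and equally mild) caveat in your version is that when a single count $v_{s';a,s}$ vanishes the maximizer has $\hat\theta_{s';a,s}=0$ on the boundary of the simplex, so ``unique interior critical point'' should be read with the understanding that the concavity argument still covers this limit, just as the paper's KL identity does under the convention $0\log 0=0$.
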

%This result is a special case of Prop.~\ref{p-BN-learn-p}.
\begin{proof}
Let $S=\{s_1,\ldots,s_l\}$ and $\Sigma=\{a_1,\ldots,a_{l'}\}$.
For each input-state pair $(a_i,s_j)$, $1\leq i\leq l'$, $1\leq j\leq l$,  we have 
$$\sum_{m=1}^l \theta_{s_m;a_is_j}= 1.$$
The parameters $\theta_{s_m;a_is_j}$ with $1\leq m\leq l$  appear in the log-likelihood function $\ell(\theta)$ as the partial sum
$$\ell_{i,j} = \sum_{m=1}^l v_{a_m;a_i,s_j}\log(\theta_{a_m;a_i,s_j}).$$
Using $\theta_{s_l;a_i,s_j} = 1- \sum_{s_m\ne s_l}\theta_{s_m;a_i,s_j}$,
the partial derivative of $\ell_{i,j}$ with respect to $\theta_{s_m;a_i,s_j}$ becomes
$$\frac{\partial \ell_{i,j}}{\partial \theta_{s_m;a_i,s_j}} 
= \frac{v_{s_m;a_i,s_j}}{\theta_{s_m;a_i,s_j}} - \frac{v_{s_l;a_i,s_j}}{1- \sum_{s_m\ne s_l}\theta_{s_m;a_i,s_j}}.$$
Equating this expression to~0 gives $\hat\theta_{s_m;a_i,s_j}$ as claimed.
Thus the vector $\hat \theta = (\hat\theta_{s_m;a_i,s_j})$ is a critial point of the likelihood function.

Claim that this point maximizes the likelihood function; the proof idea goes back to Koski et al.~\cite{koski}.
Indeed, let $H(\theta)=-\sum_{i=1}^n \log\theta_i$ denote the entropy of a probability distribution $\theta=(\theta_1,\ldots,\theta_n)$ and
let $D(\theta\|\theta') = \sum_{i=1}^n\theta_i\log\left(\frac{\theta_i}{\theta'_i}\right)$ denote the Kullback-Leibler measure between two probability distributions
$\theta=(\theta_1,\ldots,\theta_n)$ and $\theta'=(\theta'_1,\ldots,\theta'_n)$.
Then we have 
%It can be shown that the Hessian matrix of the likelihood function is negative definite at the point $(\hat \theta,\hat\theta')$.
%Hence, the likelihood function attains an isolated local maximum at $(\hat \theta,\hat\theta')$.
%However, Note that there is a simpler way to prove that the critial point $(\hat \theta,\hat\theta')$ 
%The proof follows the lines in Section~\ref{s-BN-learn-cpp}.
\begin{eqnarray*}
\ell (\theta) &=& \sum_{a\in\Sigma}\sum_{s,s'\in S} v_{s';a,s}\log \theta_{s';a,s} \\
&=& \sum_{a\in\Sigma} \sum_{s,s',s''\in S} v_{s'';a,s}  \hat\theta_{s';a,s}\log \theta_{s';a,s} \\
&=& \sum_{a\in\Sigma} \sum_{s\in S} v_{a,s}\left(\sum_{s'\in S} \hat\theta_{s';a,s}\log \theta_{s';a,s}\right) \\
&=& \sum_{a\in\Sigma} \sum_{s\in S} v_{a,s}\left(\sum_{s'\in S} \hat\theta_{s';a,s}\log \hat \theta_{s';a,s} 
- \hat\theta_{s';a,s}\log \frac{\hat \theta_{s';a,s}}{\theta_{s';a,s}}\right) \\
&=&     \sum_{a\in\Sigma}\sum_{s\in S} -v_{a,s} \left(  H(\hat \theta_{a,s}) + D(\hat\theta_{a,s}\|\theta_{a,s})\right), 
\end{eqnarray*}
where $v_{a,s} = \sum_{s''\in S} v_{s'';a,s}$, 
$\theta_{a,s}=(\theta_{s';a,s})$ and $\hat \theta_{a,s}=(\hat \theta_{s';a,s})$ for each input-state pair $(a,s)$.
%CHECK introduce H and D
Since the Kullback-Leibler measure is always non-negative~\cite{koski}, we obtain
\begin{eqnarray*}
\ell(\theta) &=&     \sum_{a\in\Sigma}\sum_{s\in S} -v_{a,s} \left(  H(\hat \theta_{a,s}) + D(\hat\theta_{a,s}\|\theta_{a,s})\right)\\ 
&\leq&     \sum_{a\in\Sigma}\sum_{s\in S} -v_{a,s} H(\hat \theta_{a,s}) \\
&=& \sum_{a\in\Sigma}\sum_{s\in S} v_{a,s}\sum_{s'\in S} \hat\theta_{s';a,s}\log\hat\theta_{s';a,s}\\
&=& \sum_{a\in\Sigma}\sum_{s,s'\in S} v_{s';a,s}\log\hat\theta_{s';a,s}\\
&=&\ell(\hat\theta).
\end{eqnarray*}
This proves the claim and the result follows.
\end{proof}

A stochastic acceptor is an abstract machine with an input interface.
Therefore, suppose the sample data consist only of the input sequences, while the observer has no access to the state sequences.
This problem can be tackled by the expectation-maximization (EM) algorithm.
This is an iterative method to find the maximum posterior estimates of parameters in a statistical model with unobserved latent variables.
%Thus the EM algorithm is directly applicable to this situation. 

The aim is to estimate these probabilities by making use of a sample set.
For this, let $A=(S,\Sigma,P,\pi,f)$ be a stochastic acceptor in the above setting and let $n\geq 1$.
We assume that there is a collection $D= (d_1,\ldots,d_N)$ of $N$ independent samples called {\em database}, where 
$d_r= x_r \in\Sigma^n$ denotes the $r$-th input sample, $1\leq r\leq N$.
Then the probability of the sample $d_r$ depending on the parameters is given by the marginal distribution
\begin{eqnarray}
p_{X|\Theta} (d_r \mid \theta) 
= \sum_{s\in S^{n+1}} p_{X,S|\Theta}(x_r,s\mid\theta).
%\frac{1}{l} \prod_{i=1}^n \theta_{y_{r,i}, x_{r,i}}.
\end{eqnarray}
The likelihood function $L=L_{X}$ is given by
\begin{eqnarray}
L(\theta) 
&=& \prod_{r=1}^N p_{X|\Theta} (d_r \mid \theta) = \prod_{x} p_{X|\Theta} (x\mid \theta)^{u_{x}}, 
\end{eqnarray}
and the log-likelihood function $\ell=\ell_{X}$ is
\begin{eqnarray}
\ell(\theta) =\log L(\theta) = \sum_{x} u_{x}\log p_{X|\Theta} (x\mid \theta),
\end{eqnarray}
where $u_{x}$ is the number of times the input sequence $x$ is observed in the sample set.
Therefore, we have
\begin{eqnarray}
\sum_{x} u_{x}  = N.
\end{eqnarray}

A version of the EM algorithm for stochastic acceptors is given by Alg.~\ref{a-SA-em}.
Note that in the E-step, the marginal probabilities $p_{X}(x|\theta)$ can be efficiently computed by the sum-product decomposition~(\ref{e-M-1}).
In the M-step, the maximal estimate $\hat \theta $ can be calculated directly by using Prop.~\ref{p-SA-fo-l}.
In the compare step, it can be shown that the inequality $\ell_X(\hat\theta)\geq \ell_X(\theta)$ always holds~\cite{sturm,zim}.
\begin{algorithm}
\caption{EM algorithm for stochastic acceptor}\label{a-SA-em}\index{EM algorithm!stochastic acceptor}
\begin{algorithmic}
\REQUIRE Stochastic acceptor $A=(S,\Sigma,P,\pi,f)$, joint probability function $p_{X,S|\Theta}$, 
parameter space $\Theta\subseteq\RR_{>0}^{l' l(l-1)}$,
integer $n\geq 1$,
observed data $u=(u_{x})\in\NN^{l'^n}$
\ENSURE Maximum likelihood estimate $\theta^*\in\Theta$
\STATE [Init] Threshold $\epsilon > 0$ and parameters $\theta\in\Theta$
\STATE [E-Step] Define matrix $U=(u_{x,s})\in\RR^{l'^n\times l^{n+1}}$ with
$$u_{x,s} = \frac{u_{x}\cdot p_{X,S|\Theta}(x,s|\theta)}{p_{X|\Theta}(x|\theta)},\quad x\in\Sigma^n,\;s\in S^{n+1}$$ 
\STATE [M-Step] Compute solution $\hat \theta\in\Theta$ of the likelihood function $\ell_{X,S}$ using the data set $U=(u_{x,s})$ as in Prop.~\ref{p-SA-fo-l}  
\STATE [Compare] If $\ell_{X}(\hat \theta) - \ell_{X}(\theta)>\epsilon$, set $\theta \leftarrow \hat\theta$ and resume with E-step
\STATE [Output] $\theta^* \leftarrow \hat\theta$
\end{algorithmic}
\end{algorithm}
%-----------------------------------------

%Baum-Welch out - passt nicht richtig zusammen
The structure of stochastic acceptors allows a more efficient implementation of the EM algorithm which amounts to a variant of the Baum-Welch algorithm~\cite{sturm, zim-SA}.
%%The EM algorithm is an iterative procedure that is used to derive maximum likelihood estimates for the parameters of algebraic statistical 
%%models which include hidden variables, such as the hidden Markov model.
%% consider a hidden Markov model $f:\RR^{l\times (l-1)}\times \RR^{l\times (l'-1)}\rightarrow \RR^{l'^n}$ with parameter space
%%$\Theta_1\subseteq \RR_{>0}^{l\times (l-1)}\times \RR_{>0}^{l\times (l'-1)}$.
To see this, let $n\geq 1$ be an integer.
Let $u=(u_{x})\in\NN^{l'^n}$ be a data vector, 
where $u_{x}$ is the number of times the input sequence $x\in\Sigma^n$ is observed in the sample set.
The full data vector $U=(u_{x,s})\in\NN^{l'^n\times l^{n+1}}$ is not available, 
where $u_{x,s}$ denotes the number of times the pair $(x,s)\in\Sigma^n\times S^{n+1}$ is observed.
The EM algorithm estimates in the E-step the counts of the full data vector by the quantity
\begin{eqnarray}\label{eq-SA-Eu}
u_{x,s} = \frac{u_{x}\cdot p_{X,S|\Theta}(x,s|\theta)}{p_{X|\Theta}(x|\theta)},\quad x\in\Sigma^n,\;s\in S^{n+1}.
\end{eqnarray}
These counts provide the sufficient statistic $v$ of the model and are used in the M-step to obtain updated parameter values based on
the solution of the maximum likelihood problem in Prop.~\ref{p-SA-fo-l}.
The expected values of the sufficient statistic $v$ can be written in a way that leads to a more efficient implementation of the 
EM algorithm using dynamic programming.

For this, we introduce socalled forward and backward probabilities.
The {\em forward probability} 
\begin{eqnarray}
f_{x,s}(i) = p_{X_1,\ldots,X_i,S_{i+1}}(x_1,\ldots,x_i,s),
\end{eqnarray}
where $s\in S$ and $1\leq i\leq n$,
is the joint probability that the prefix $x_1\ldots x_i$ of the observed input sequence $x\in\Sigma^n$ having length $i$ ends in state~$s$.
For simplicity, assume that the initial distribution of $S_1$ is uniform; i.e., $p_{S_1}(s)=\frac{1}{l}$ for all $s\in S$.
Then we put $f_{x,s}(0)= \frac{1}{l\cdot l'^n}$.

The {\em backward probability}
\begin{eqnarray}
b_{x,s}(i) = p_{X_{i+1},\ldots,X_n|S_{i+1}}(x_{i+1},\ldots,x_n\mid s),
\end{eqnarray}
where $s\in S$ and $0\leq i\leq n-1$,
is the conditional probability that the suffix $x_{i+1}\ldots x_n$ of the observed input sequence $x\in\Sigma^n$ having length $n-i$ starts in state $s$.

The marginal probability $p_{X|\Theta}(x|\theta)$ of the observed input sequence $x\in\Sigma^n$ can be calculated based on the forward probabilities,
\begin{eqnarray}
p_{X|\Theta}(x|\theta) = \sum_{s\in S} f_{x,s}(n) .
\end{eqnarray}
Note that the forward and backward probabilities can be recursively computed.
To see this, consider for the input sequence $x\in \Sigma^n$ the $l\times n$ matrices $F_{x}=(f_{x,s}(i))_{s,i}$ and $B_{x}=(b_{x,s}(i))_{s,i}$  
corresponding to the forward and backward probabilities, respectively.
The entries of the matrices $F_{x}$ and $B_{x}$ can be efficiently calculated in an iterative manner,
\begin{eqnarray}
f_{x,r}(0) &=& \frac{1}{l\cdot l'^n},\quad r\in S,\\
f_{x,r}(i) &=& \sum_{s\in S} f_{x,s}(i-1)\cdot\theta_{r;x_i,s},\quad r\in S,\;1\leq i\leq n,
\end{eqnarray}
and
\begin{eqnarray}
b_{x,r}(n) &=& 1, \quad r\in S,\\
b_{x,r}(i) &=& \sum_{s\in S} \theta_{s;x_{i+1},r}\cdot b_{x,s}(i+1),\quad r\in S,\;0\leq i\leq n-1.
\end{eqnarray}

\begin{proposition}\label{p-SA-bw}
In view of the sufficient statistic $v$, we have for all $s,s'\in S$ and $a\in\Sigma$,
\begin{eqnarray}
v_{s';a,s} &=& \sum_{x\in\Sigma^n}
\frac{u_{x}}{p_{X|\Theta}(x|\theta)}
\sum_{i=1}^{n} f_{x,s}(i-1)\cdot \theta_{s';a,s}\cdot b_{x,s'}(i).
\end{eqnarray}
\end{proposition}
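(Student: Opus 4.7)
The plan is to unfold the E--step definition of $v_{s';a,s}$ and then collapse the resulting sum over state paths into a product of a forward probability, the transition parameter, and a backward probability, which is the standard Baum--Welch identity adapted to the stochastic--acceptor setting.

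First I would recall how $v$ relates to $u$. By construction, $v_{s';a,s}$ counts how often the parameter $\theta_{s';a,s}$ occurs as a factor in the full--data joint probability $p_{X,S\mid\Theta}(x,s\mid\theta)$, so
\begin{eqnarray*}
v_{s';a,s} \;=\; \sum_{x\in\Sigma^n}\sum_{s\in S^{n+1}} u_{x,s}\,\sum_{i=1}^{n} \mathbb{1}\{x_i=a,\;s_i=s,\;s_{i+1}=s'\}.
\end{eqnarray*}
Substituting the E--step expression (\ref{eq-SA-Eu}) for $u_{x,s}$ and interchanging the order of summation, the problem reduces to evaluating, for each input sequence $x\in\Sigma^n$ and each position $i$ with $x_i=a$, the partial marginal
\begin{eqnarray*}
\mu_i(s,s';x) \;=\; \sum_{\substack{s\in S^{n+1}\\ s_i=s,\;s_{i+1}=s'}} p_{X,S\mid\Theta}(x,s\mid\theta).
\end{eqnarray*}

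The main step is then to factor $\mu_i(s,s';x)$ along the position $i$. Using the factorization of the belief network,
\begin{eqnarray*}
p_{X,S\mid\Theta}(x,s\mid\theta) \;=\; \tfrac{1}{l'^n}\,\pi_{s_1}\,\theta_{s_2;x_1,s_1}\cdots\theta_{s_{n+1};x_n,s_n},
\end{eqnarray*}
one splits the product at the transition governed by $\theta_{s';x_i,s}$ and sums separately over $s_1,\ldots,s_{i-1}$ and over $s_{i+2},\ldots,s_{n+1}$. A short induction on $i$ shows that the first sum is exactly $f_{x,s}(i-1)$, since both satisfy the same recurrence and have the same base value $\tfrac{1}{l\cdot l'^n}$ when $\pi$ is uniform; similarly the second sum equals $b_{x,s'}(i)$ from its base value $1$. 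This yields
\begin{eqnarray*}
\mu_i(s,s';x) \;=\; f_{x,s}(i-1)\cdot \theta_{s';x_i,s}\cdot b_{x,s'}(i).
\end{eqnarray*}

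Plugging back and using the indicator $\mathbb{1}\{x_i=a\}$ to replace $\theta_{s';x_i,s}$ by $\theta_{s';a,s}$ at each relevant position, I would obtain the claimed identity after summing over $i$ and $x$. The only delicate point is the first one: keeping the normalization factor $1/l'^n$ consistently book--kept, so that the initialization $f_{x,s}(0)=\tfrac{1}{l\cdot l'^n}$ really does make the forward recurrence compute the probability interpretation stated in its definition. Once that bookkeeping is correct, the rest is a mechanical reordering of sums and an application of the Markov factorization of the network.
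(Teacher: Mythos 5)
Your proposal follows essentially the same route as the paper: unfold $v_{s';a,s}$ as an indicator-weighted sum over positions and state sequences, substitute the E-step estimate (\ref{eq-SA-Eu}) for $u_{x,s}$, and collapse the inner sum over state paths into $f_{x,s}(i-1)\cdot\theta_{s';a,s}\cdot b_{x,s'}(i)$ by splitting the Markov factorization at position $i$. You merely supply more detail than the paper at the final step (the paper asserts the forward--backward factorization without the induction), and your explicit attention to where the $1/l'^n$ normalization sits is a sound addition rather than a deviation.
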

\begin{proof}
Let $I_A$ denote the indicator function of a proposition $A$; i.e., $I_A=1$ if $A$ is true and $I_A=0$ otherwise.
For each state sequence $\sigma\in S^{n+1}$, we have
%$$w_{ss'}(\sigma) = \sum_{i=1}^{n} I_{(\sigma_i\sigma_{i+1} = ss')},\quad s,s'\in S.$$
%Then we have 
\begin{eqnarray*}
v_{s';a,s} 
%&=& \sum_{y\in\Omega^n}\sum_{x\in\Sigma^n} \sum_{\sigma\in S^{n+1}} w_{ss'}(\sigma)\cdot u_{x,\sigma,y}\\
&=& \sum_{x\in\Sigma^n} \sum_{i=1}^{n} I_{(\sigma_i\sigma_{i+1} = ss')} \cdot I_{(x_i=a)}\cdot u_{x,\sigma}.
\end{eqnarray*}
Thus in view of~(\ref{eq-SA-Eu}), we obtain
\begin{eqnarray*}
v_{s';a,s} = \sum_{x\in\Sigma^n} \frac{u_{x}}{p_{X|\Theta} (x|\theta)} 
\sum_{i=1}^{n} \sum_{\sigma\in S^{n+1}} I_{(\sigma_i\sigma_{i+1} = ss')} \cdot I_{(x_i=a)} \cdot p_{X,S|\Theta}(x,\sigma|\theta).
\end{eqnarray*}
The innermost term is the sum of all probabilities of pairs $(x,\sigma)$ for an input sequence $x$ and all state sequences $\sigma$ such that $\sigma_i\sigma_{i+1}=ss'$ and $x_i=a$.
That is, observing the input sequence $x$ and a transition from state $s$ to state $s'$ at position $i$ with $x_i=a$.
Thus we have
\begin{eqnarray*}
\sum_{\sigma\in S^{n+1}} I_{(\sigma_i\sigma_{i+1} = ss')} \cdot I_{(x_i=a)} \cdot p_{X,S|\Theta}(x,\sigma|\theta) 
&=& f_{x,s}(i-1)\cdot \theta_{s';a,s}\cdot b_{x,s'}(i).
%\lefteqn{\sum_{\sigma\in S^{n+1}} I_{(\sigma_i\sigma_{i+1} = ss')} \cdot I_{(x_i=a)} \cdot p_{X,S|\Theta}(x,\sigma|\theta) }\\
%&=& \PP(X=x, X_i=a, S_i=s,S_{i+1}=s')\\
%&=& \PP(X_1=x_1,\ldots,X_{i-1}=x_{i-1}, X_i=a, S_i=s)\\
%&& \cdot \PP(S_{i+1}=s'\mid X_{i}=a, S_i=s)\\
%&& \cdot \PP(X_{i+1}=x_{i+1},\ldots,X_n=x_n, \mid S_{i+1}=s')\\
%&=& f_{x,s}(i-1)\cdot \theta_{s';s,x_i}\cdot b_{x,s'}(i).
\end{eqnarray*}
The result follows.
\end{proof}

The proposition shows that the calculation of the forward and backward probability matrices yields directly the sufficient statistic without the need to estimate the 
counts $U=(u_{x,s})$.
This amounts to the Baum-Welch algorithm (Alg.~\ref{a-SA-bw}).
On the other hand, the EM algorithm requires to maintain the $l'^n\times l^{n+1}$ data set $U=(u_{x,s})$ from which the sufficient statistic can be established.
\begin{algorithm}
\caption{Baum-Welch algorithm for stochastic acceptors}\label{a-SA-bw}\index{Baum-Welch algorithm}
\begin{algorithmic}
\REQUIRE Stochastic acceptor $A= (S,\Sigma,P,\pi,f)$, $\pi$ uniform, joint probability function $p_{X,S|\Theta}$, 
parameter space $\Theta\subseteq\RR_{>0}^{l'l(l-1)}$, 
integer $n\geq 1$, observed data $u=(u_{x})\in\NN^{l'^n}$
\ENSURE Maximum likelihood estimate $\theta^*\in\Theta$
\STATE [Init] Threshold $\epsilon > 0$ and parameters $\theta\in\Theta$
\STATE [E-Step] Compute the sufficient statistic $v$ as in Prop.~\ref{p-SA-bw}
%$$u_{x,y} = \frac{u_y\cdot p_{X,Y|\Theta,\Theta'}(x,y|\theta,\theta')}{p_{Y|\Theta,\Theta'}(y|\theta,\theta')},\quad x\in\Sigma^n,\;y\in{\Sigma'}^n$$ 
%%%by using the matrices $F_y = (F_{y,s}(i))$ and $B_y = (b_{y,s}(i))$ 
%%%Define matrix $U=(u_{x,y})\in\RR^{l^n\times l'^n}$ with
%%%$$u_{x,y} = \frac{u_y\cdot p_{X,Y|\Theta,\Theta'}(x,y|\theta,\theta')}{p_{Y|\Theta,\Theta'}(y|\theta,\theta')},\quad x\in\Sigma^n,\;y\in{\Sigma'}^n$$ 
%%%by using the matrices $F_y = (F_{y,s}(i))$ and $B_y = (b_{y,s}(i))$ 
\STATE [M-Step] Compute solution $\hat \theta\in\Theta$ of the likelihood function $\ell_{X,S}$ using the sufficient statistic $v$ as in Prop.~\ref{p-SA-fo-l}
%%%%%Prop.~\ref{prop-fomcm}  
\STATE [Compare] If $\ell_{X}(\hat \theta) - \ell_{X}(\theta)>\epsilon$, set $\theta \leftarrow \hat\theta$ and resume with E-step
\STATE [Output] $\theta^* \leftarrow \hat\theta$
\end{algorithmic}
\end{algorithm}

%---------------------------------------------------------------------------------

\end{document}